\documentclass[conference]{IEEEtran}
\IEEEoverridecommandlockouts
\usepackage{cite}
\usepackage{amsmath,amssymb,amsfonts}
\usepackage{algorithm}
\usepackage{algorithm}         
\usepackage{algpseudocode} 
\usepackage{graphicx}
\usepackage{textcomp}
\usepackage{xcolor}
\usepackage{booktabs}
\usepackage{siunitx}
\usepackage{svg}
\usepackage{graphicx}
\usepackage[caption=false,font=footnotesize]{subfig}
\usepackage[nomain, acronym, toc, shortcuts]{glossaries}
\usepackage{booktabs}
\usepackage{siunitx}
\usepackage{amsmath}
\usepackage{amsthm}

\newacronym{dml}{DML}{Distributed Machine Learning}
\newacronym{har}{HAR}{Human Activity Recognition}
\newacronym{adl}{ADL}{Activities of Daily Living}
\newacronym{fl}{FL}{Federated Learning}
\newacronym{sl}{SL}{Split Learning}
\newacronym{p2p}{P2P}{Peer-to-Peer}
\newacronym{tl}{TL}{Transfer Learning}
\newacronym{sci}{SCI}{Spinal Cord Injury}
\newacronym{shc}{SHC}{Secondary Health Condition}
\newacronym{shcs}{SHCs}{Secondary Health Conditions}

\newtheorem{theorem}{Theorem}
\newtheorem{lemma}[theorem]{Lemma}

\makeglossaries
\def\BibTeX{{\rm B\kern-.05em{\sc i\kern-.025em b}\kern-.08em
    T\kern-.1667em\lower.7ex\hbox{E}\kern-.125emX}}
\begin{document}

\title{FedSCS-XGB - Federated Server-centric surrogate XGBoost for continual health monitoring}

\author{
\IEEEauthorblockN{Felix Walger}
\IEEEauthorblockA{\textit{Chair of Information Theory and Data Analytics (INDA)} \\
\textit{RWTH Aachen University} \\
Aachen, Germany \\
felix.walger@inda.rwth-aachen.de}
\and
\IEEEauthorblockN{Mehdi Ejtehadi}
\IEEEauthorblockA{\textit{Spinal Cord Injury Artificial Intelligence (SCAI) Lab} \\
\textit{ETH Zürich \& Swiss Paraplegic Research (SPF)} \\
Nottwil, Switzerland \\
mehdi.ejtehadi@hest.ethz.ch}
\and
\IEEEauthorblockN{Anke Schmeink}
\IEEEauthorblockA{\textit{Chair of Information Theory and Data Analytics (INDA)} \\
\textit{RWTH Aachen University} \\
Aachen, Germany \\
anke.schmeink@inda.rwth-aachen.de}
\and
\IEEEauthorblockN{Diego Paez-Granados}
\IEEEauthorblockA{\textit{Spinal Cord Injury Artificial Intelligence (SCAI) Lab} \\
\textit{ETH Zürich \& Swiss Paraplegic Research (SPF)} \\
Nottwil, Switzerland \\
diego.paez@hest.ethz.ch}
 }

\maketitle

\begin{abstract}
Wearable sensors with local data processing can detect health threats early, enhance documentation, and support personalized therapy. In the context of spinal cord injury (SCI), which involves risks such as pressure injuries and blood pressure instability, continuous monitoring can help mitigate these by enabling early deDtection and intervention.
In this work, we present a novel distributed machine learning (DML) protocol for human activity recognition (HAR) from wearable sensor data based on gradient-boosted decision trees (XGBoost). The proposed architecture is inspired by Party-Adaptive XGBoost (PAX) while explicitly preserving key structural and optimization properties of standard XGBoost, including histogram-based split construction and tree-ensemble dynamics. 
First, we provide a theoretical analysis showing that, under appropriate data conditions and suitable hyperparameter selection, the proposed distributed protocol can converge to solutions equivalent to centralized XGBoost training. 
Second, the protocol is empirically evaluated on a representative wearable-sensor HAR dataset, reflecting the heterogeneity and data fragmentation typical of remote monitoring scenarios. Benchmarking against centralized XGBoost and IBM PAX demonstrates that the theoretical convergence properties are reflected in practice. The results indicate that the proposed approach can match centralized performance up to a gap under 1\% while retaining the structural advantages of XGBoost in distributed wearable-based HAR settings.
\end{abstract}

\begin{IEEEkeywords}
Distributed machine learning, Federated learning, XGBoost, Gradient-boosted decision trees, Human activity recognition, Wearable sensors, Biomedical signal processing, Privacy-preserving learning
\end{IEEEkeywords}

\section{Introduction}
Many chronic health conditions face a lifelong and evolving risk of \ac{shcs}. Individuals with \ac{sci} live with chronic pain, pressure injuries, and cardiometabolic complications, which frequently intensify after discharge into community living~\cite{bresnahan2022pain,glisic2024shc}. 
These risks exhibit pronounced inter-individual variability and long-term temporal dynamics driven by injury characteristics, aging, and everyday behavior, rendering episodic clinical assessment insufficient.
Wearable sensing enables continuous observation of functional and physiological patterns in daily life, but resulting data streams are inherently heterogeneous, non-stationary, and privacy-sensitive. 
\ac{har} models trained on such data implicitly encode sensitive behavioral and health-related information \cite{Ejtehadi2023LearningRehabilitation, bensland2023}, which cannot be reliably disentangled once transmitted to a central entity~\cite{elgendi2025balancing}. 
Distributed Machine Learning (DML) addresses this challenge by enabling collaborative model training while retaining raw sensor data locally, thereby supporting personalization under non-IID data and limiting information leakage~\cite{vepakomma_split_2018,kalabakov2024federated}.
Gradient-boosted decision trees such as XGBoost \cite{chen2016xgboost} provide deterministic inference, predictable computational complexity, and interpretable decision logic through tree splits and additive structure \cite{friedman2000greedy,molnar2019interpretable}. These properties make them attractive for edge-based biomedical HAR, but standard XGBoost training assumes centralized access to the full dataset, which conflicts with privacy-preserving wearable deployments.
To bridge this gap, we propose a foundational DML protocol for XGBoost-based wearable monitoring for chronic lifelong conditions such as \ac{sci}, conceptually shown in Fig. \ref{fig:continual_monitoring}. The protocol enables distributed, privacy-aware training while preserving the key mathematical and optimization behavior of centralized, histogram-based boosting. Inspired by existing work \cite{ong2020adaptive}, our approach places stronger emphasis on retaining native histogram construction and split-finding dynamics, rather than adapting model structure to protocol constraints. 

\begin{figure}
    \centering
    \includegraphics[width=0.7\linewidth]{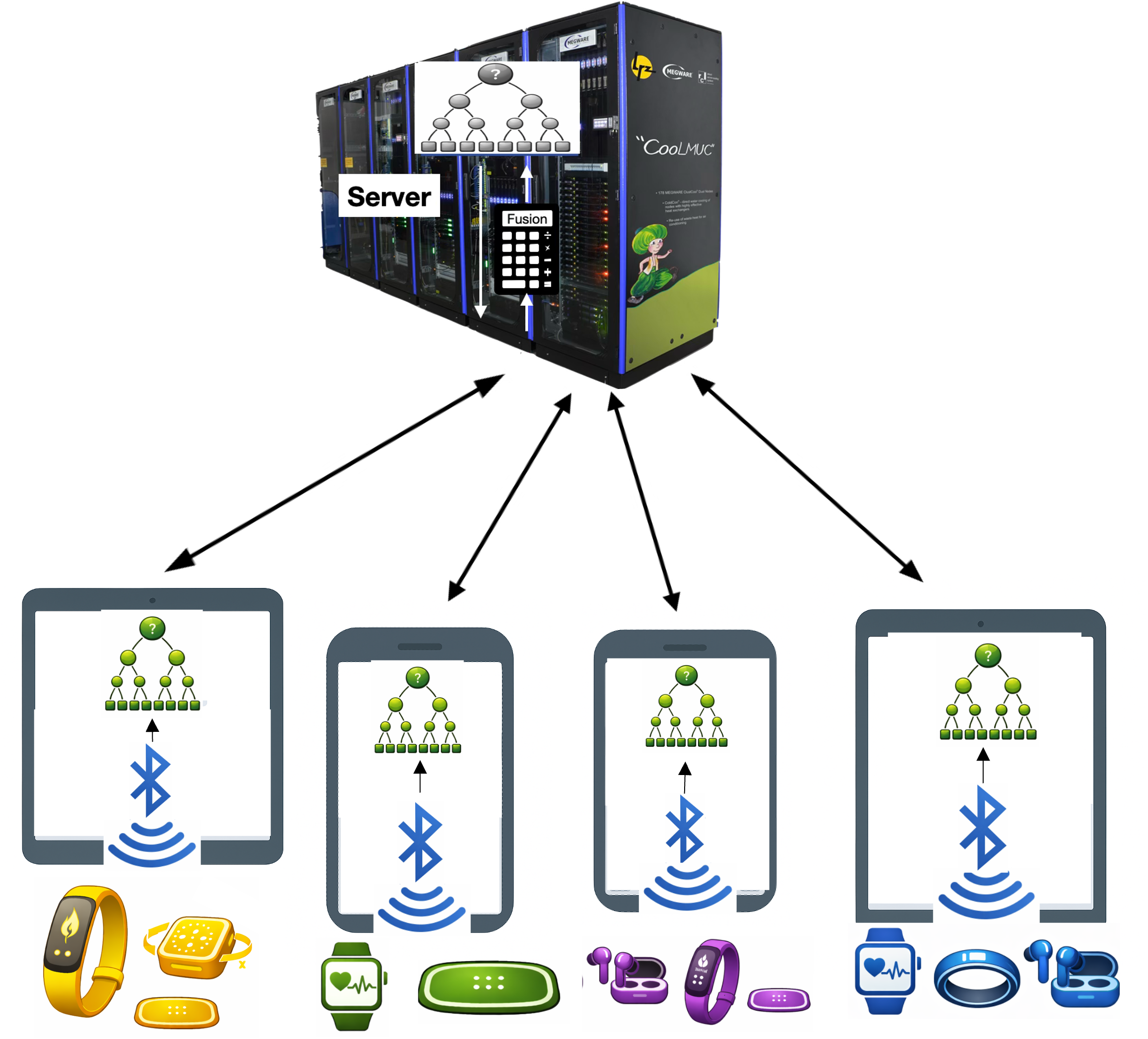}
    \caption{A continual health monitoring system with \ac{fl}.}
    \label{fig:continual_monitoring}
\end{figure}
The main contributions of this work are:
\begin{enumerate}
    \item an analysis of fundamental requirements for distributed machine learning in lifelong monitoring - focused on chronic health conditions,
    \item design of a PAX-inspired distributed XGBoost protocol, preserving core boosting and histogram mechanisms,
    \item a convergence analysis demonstrating equivalence to centralized XGBoost under suitable conditions, and
    \item an evaluation on feature-extracted \ac{har} data based on a emulated implementation using Flower \cite{beutel_flower_2022}.
\end{enumerate}

\section{Related Work}
\label{sec:related}


\ac{fl} is a central paradigm for privacy-preserving model training across distributed wearable and mobile sensors and has been widely adopted for \ac{har}. By enabling collaborative learning without centralizing raw sensor data, FL directly addresses privacy and data governance concerns \cite{aouedi2024flsurvey}. 
Early feasibility studies~\cite{sozinov2018flhar} showed that baseline algorithms such as \textit{FedAvg} can achieve competitive performance relative to centralized training, while also revealing sensitivity to client heterogeneity and data imbalance. Subsequent work therefore introduced architectural extensions to improve robustness and scalability, including hierarchical FL~\cite{concone2022hierarchicalfl}, hybrid horizontal--vertical federation~\cite{zhou2022twoDfl}, and cluster-based aggregation strategies~\cite{ouyang2023clusterfl}.

Personalization has emerged as a dominant research direction in FL-based HAR. Approaches such as \textit{FedHealth}~\cite{chen2020fedhealth}, \textit{Proto-HAR}~\cite{cheng2023protohar}, and meta-learning-based methods~\cite{li2021metahar} explicitly separate shared representations from user-specific adaptation, consistently outperforming purely global models under heterogeneous client distributions.

Recent studies address deployment-related constraints on heterogeneous edge devices. Adaptive systems such as \textit{Hydra}~\cite{wang2024hydra} and knowledge-distillation-based FL methods~\cite{tu2021feddl} reduce communication and computation overhead. Complementary system-level analyses~\cite{gajanin2024asynchronous} investigate robustness under bandwidth limitations and asynchronous updates, while additional work focuses on privacy compliance through encryption and machine unlearning mechanisms~\cite{xiao2021harflprivacy}.
However, Most existing \ac{fl} research in HAR focuses on neural network architectures, while federated ensemble methods remain comparatively underexplored. For horizontal FL settings several Federated XGBoost approaches have been proposed.

A foundational contribution is \emph{Party-Adaptive XGBoost} (PAX)~\cite{ong2020adaptive}, which extends XGBoost’s histogram-based split finding using client-specific quantile sketches. These sketches are aggregated at the server to construct globally consistent split candidates without exposing raw data. Subsequent studies~\cite{jones_federated_2022} demonstrate that Federated XGBoost maintains performance close to centralized training even under severe sample-wise non-IID partitions, highlighting the inherent robustness of tree-based models. 

To further reduce communication overhead, ~\cite{ma_gradient-less_2023} propose a gradient-less Federated XGBoost approach that aggregates locally trained ensembles instead of gradients or histograms. Alternative work explores improved sampling strategies, such as Minimal Variance Sampling~\cite{lindskog_histogram-based_2023}, to enhance split quality under limited communication. A recent survey~\cite{bodynek_applying_2023} categorizes these methods into histogram-based, ensemble-aggregation, and sampling-enhanced approaches, outlining their respective trade-offs.
Despite substantial progress, current distributed learning approaches for HAR exhibit several limitations. Many frameworks are highly scenario-specific and rely on deep neural networks that are computationally demanding for long-term edge deployment~\cite{li2021metahar}. Moreover, ensemble-based methods such as XGBoost remain underrepresented in federated HAR, despite their advantages in interpretability, energy efficiency, and robustness to heterogeneous data.
Motivated by these gaps, this work investigates a communication-efficient and privacy-preserving Federated XGBoost framework tailored to heterogeneous wearable sensor data, with a focus on scalable and interpretable activity recognition in real-world rehabilitation scenarios.

\section{Methods}




\subsection{Sensory System, ADLs, and Data Processing}
\label{subsec:sensors}

Based on the prior validated ADL recognition framework in \cite{Ejtehadi2023LearningRehabilitation}, we consider \textbf{16} clinically relevant ADLs/states for classification using the unobtrusive multimodal sensor configuration in Fig.~\ref{fig:sensors}, covering wheelchair activities (resting, phone/computer use, arm raises, eating/drinking, hand cycling), mobility modes (assisted/self propulsion), transfer events (to/from wheelchair), pressure relief, four lying postures (back/right/left/stomach), and the empty-wheelchair state.

The time series data were processed using a 5th-order Butterworth low-pass filter with a cut-off frequency of 5~Hz (10~Hz for the wheel IMU) \cite{Ejtehadi2023LearningRehabilitation}. The filtered signals were resampled to 20~Hz to reduce computational complexity and segmented using a sliding-window approach; based on preliminary experiments, a 4~s window with a 2~s step (50\% overlap) was used. Overall, data from 8 individuals were included, yielding 44{,}358 windows across the 16 classes, with moderate class imbalance: phone/computer/eating typically contributed $\sim$600--700 windows per subject, posture and propulsion-related classes were commonly around $\sim$300 windows per subject, and transfer activities were comparatively sparse (tens to a few hundred windows per subject).

For feature extraction, we used the TIFEX-Py toolbox
~\cite{tifexpy} to compute a compact set of time- and frequency-domain descriptors per window, including first- and second-order statistics (mean, standard deviation, variance, median, min/max), higher-order moments (skewness, kurtosis), change/complexity measures (absolute sum of changes, mean change, mean absolute change, longest strike above/below mean, sample entropy, CID-CE), and spectral/periodicity features (FFT coefficient magnitude, Welch density, autocorrelation, number of peaks, and zero-crossing rate).

\subsection{FedSCS-XGB}
We introduce here a PAX-inspired Federated implementation of XGBoost called Federated Server-Centric Surrogate XGBoost.
\begin{figure}[t]
  \centering
  \includegraphics[width=\columnwidth]{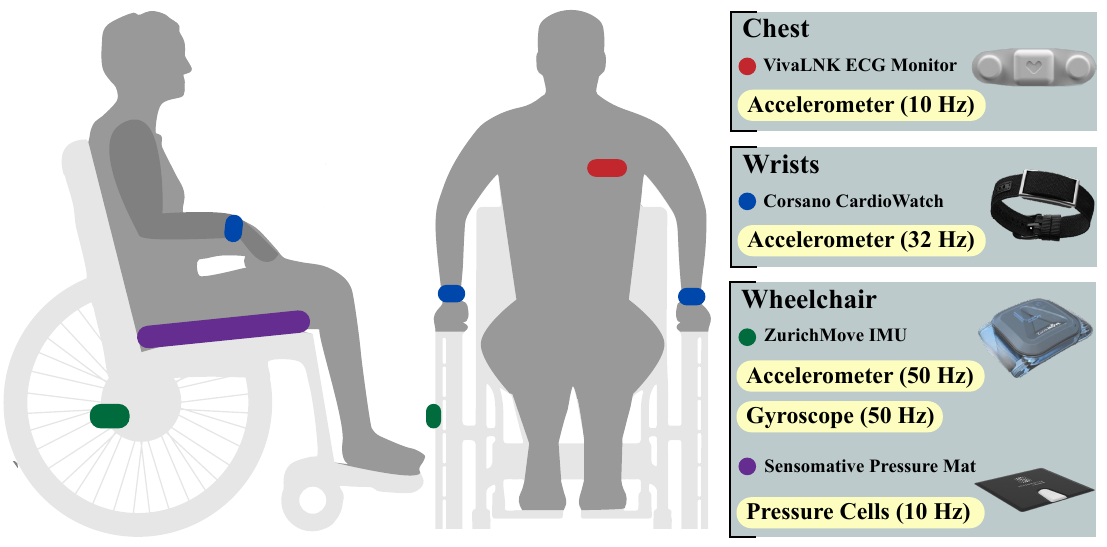}
  \caption{Sensor setup for the SCI wheelchair user: VivaLNK Wearable ECG monitor sensor on chest, two Corsano’s CardioWatch 287-2 worn at wrists, Sensomative wheelchair mat placed under bottom cushion, and Zurichmove IMU on the wheel of the wheelchair.}
  \label{fig:sensors}
\end{figure}
\paragraph{Protocol overview}
FedSCS-XGB alternates two phases. In sketch rounds, clients transmit mergeable per-feature DDSketches weighted by (diagonal) Hessians; the server merges these sketches and derives global bin edges.
In atom rounds, clients quantize samples using the received edges, aggregate sufficient statistics $(W,G,H)$ per multi-feature bin vector (atom key), and transmit these atom summaries.
The server conditions atoms on the current node-path, evaluates histogram split candidates, selects the best split per node, and returns the resulting stump(s) to clients to update local margins. The protocol procedures are illustrated in Algorithms \ref{alg:scs_client_1col} and \ref{alg:scs_server_1col}.

\paragraph{Multiclass softmax and tree groups.}
For multiclass classification with $K$ classes, the client computes softmax probabilities and per-class gradients/Hessians.
The server grows one tree per class per boosting round (XGBoost \emph{tree group} semantics), i.e., it evaluates gains per class and sums them to select splits; leaf weights are class-wise vectors.

\paragraph{Relation to PAX}
Fed SCS-XGB is inspired by the Party-Adaptive XGBoost (PAX) framework of Ong \emph{et al} \cite{ong_adaptive_2020}, but differs in several key aspects.
While PAX constructs party-specific surrogate \emph{feature representations} and associated histogram statistics to guide split evaluation,
Fed SCS-XGB explicitly separates bin-edge estimation and sufficient-statistic aggregation into two synchronous communication phases.
In particular, global bin edges are determined via Hessian-weighted, mergeable sketches, after which clients transmit only aggregated atom statistics per multi-feature bin vector.
Furthermore, Fed SCS-XGB performs local prediction and gradient computation on raw feature values and uses the surrogate bins solely for aggregation,
whereas PAX conceptually evaluates split candidates with respect to the surrogate feature representation.
These modifications preserve the underlying XGBoost objective while improving communication efficiency and simplifying synchronization,
and they form the basis for the convergence analysis presented in this work.

\begin{algorithm}[t]
\caption{FedSCS-XGB Server (multiclass: Sketch $\rightarrow$ Atoms)}
\label{alg:scs_server_1col}
\begin{algorithmic}[1]
\Require Rounds $T$, features $d$, bins $B$, depth $D$, reg.\ $\lambda,\gamma$
\State Initialize global additive model $f^{(A)}\gets 0$
\For{$t=1,\dots,T$}
  \State Sample clients $C_t$

  \Statex \hspace{-0.6em}\textbf{Phase 1 (Sketch).} \Comment{\textbf{Comm:} S$\rightarrow$C}
  \State Send \texttt{SKETCH\_REQ}$(t,B)$ to all $c\in C_t$
  \State \textbf{Barrier:} await $\{\mathcal{K}_{c,f}^{(t)}\}_{f=1}^d$ from all $c\in C_t$
  \For{$f=1,\dots,d$}
    \State $\mathcal{E}_f^{(t)} \gets \textsc{Quantiles}\!\big(\textsc{Merge}(\{\mathcal{K}_{c,f}^{(t)}\}_{c\in C_t}),\,B\big)$
  \EndFor

  \Statex \hspace{-0.6em}\textbf{Phase 2 (Atoms).} \Comment{\textbf{Comm:} S$\rightarrow$C}
  \State Send \texttt{ATOM\_REQ}$(t,f^{(A)}_{t-1},\mathcal{E}^{(t)})$ to all $c\in C_t$
  \State \textbf{Barrier:} await atom maps $\{(W_r^{(c)},G_{r,\cdot}^{(c)},H_{r,\cdot}^{(c)})\}_{r\in\mathcal{R}_c}$ from all $c\in C_t$
  \State Merge atoms by key: $(W_r,G_{r,\cdot},H_{r,\cdot}) \gets \sum_{c\in C_t}(W_r^{(c)},G_{r,\cdot}^{(c)},H_{r,\cdot}^{(c)})$

  \State Grow trees depth-wise up to $D$ using histogram split search on atoms:
  \For{each active node $a$}
    \For{each $(f,q)$ with $q\in\{1,\dots,B-1\}$}
      \State $(G_{L,\cdot},H_{L,\cdot},G_{R,\cdot},H_{R,\cdot}) \gets \textsc{PrefixStats}(a,f,q;\{(W_r,G_{r,\cdot},H_{r,\cdot})\})$
      \State $\mathrm{Gain}(a,f,q)\gets \sum_{k=1}^K \mathrm{Gain}(G_{L,k},H_{L,k},G_{R,k},H_{R,k})$ using Eq.~\eqref{eq:gain_precise}
    \EndFor
    \State $(f^\star,q^\star)\gets \arg\max_{f,q}\mathrm{Gain}(a,f,q)$ \Comment{deterministic tie-break}
    \State $w_{L,k}\gets -G_{L,k}/(H_{L,k}+\lambda)$,\;\; $w_{R,k}\gets -G_{R,k}/(H_{R,k}+\lambda)$
    \State Add split $(f^\star,\theta=\mathcal{E}^{(t)}_{f^\star}[q^\star],w_L,w_R)$ to $\{T_{t,k}\}_{k=1}^K$
  \EndFor
  \State $f^{(A)}_{t}\gets f^{(A)}_{t-1} + \eta\cdot\{T_{t,k}\}_{k=1}^K$
\EndFor
\end{algorithmic}
\end{algorithm}

\begin{algorithm}[t]
\caption{FedSCS-XGB Client $p_i$ (multiclass softmax)}
\label{alg:scs_client_1col}
\begin{algorithmic}[1]
\Require Local data $(X,y)$, $N_i$ samples, $K$ classes, $d$ features
\State Maintain margins $\mathbf{m}\in\mathbb{R}^{N_i\times K}$ (init.\ $0$)
\While{requests arrive}
  \State \textbf{Receive} message from server
  \If{\texttt{SKETCH\_REQ}$(t,B)$}
    \State Compute per-sample $(g_{i,k}^{(t)},h_{i,k}^{(t)})$ (softmax, diag.\ Hessian) and weights $w_i^{(t)}\gets \sum_{k=1}^K h_{i,k}^{(t)}$
    \For{$f=1,\dots,d$}
      \State Build weighted sketch $\mathcal{K}_{i,f}^{(t)}$ from $\{(x_{i,f},w_i^{(t)})\}_{i=1}^{N_i}$
    \EndFor
    \State \textbf{Send} $\{\mathcal{K}_{i,f}^{(t)}\}_{f=1}^d$
  \ElsIf{\texttt{ATOM\_REQ}$(t,f^{(A)}_{t-1},\mathcal{E}^{(t)})$}
    \State Update margins: $\mathbf{m}\gets \mathbf{m}+\eta\cdot f^{(A)}_{t-1}(X)$
    \State Compute per-sample $(g_{i,\cdot}^{(t)},h_{i,\cdot}^{(t)})$ as above
    \State Bin-ID vectors: $\mathbf{b}_i\gets ( \textsc{BinIndex}(\mathcal{E}^{(t)}_f,x_{i,f}) )_{f=1}^d$
    \State Aggregate atoms per occupied key $r$: $(W_r,G_{r,\cdot}^{(t)},H_{r,\cdot}^{(t)}) \gets \sum_{i:\mathbf{b}_i=r}(1,g_{i,\cdot}^{(t)},h_{i,\cdot}^{(t)})$
    \State \textbf{Send} $\{(W_r,G_{r,\cdot}^{(t)},H_{r,\cdot}^{(t)})\}_{r\in\mathcal{R}_i}$
  \EndIf
\EndWhile
\end{algorithmic}
\end{algorithm}

\subsection{Convergence to the Centralized Histogram XGBoost Objective
via Hessian-Weighted DDSketch Binning (SCS-XGB)}
\label{subsec:scs_xgb_convergence_precise}

\paragraph{Notation}
Table~\ref{tab:notation_scsxgb_precise} summarizes all quantities used throughout this subsection.

\begin{table}[t]
\caption{Notation for the SCS-XGB convergence analysis.}
\centering
\footnotesize
\setlength{\tabcolsep}{4pt}
\renewcommand{\arraystretch}{1.05}
\begin{tabular}{ll}
\toprule
Symbol & Meaning \\
\midrule
$\mathcal{D}=\{(x_j,y_j)\}_{j=1}^N$ & Dataset, $N$ samples \\
$x_j\in\mathbb{R}^d$ & Feature vector, $d$ features \\
$M_t$ & Model after $t$ boosting rounds \\
$T,m$ & \# rounds, max.\ tree depth \\[0.2em]

$\ell(y,\hat y)$ & Loss, convex and $C^2$ in $\hat y$ \\
$g_j^{(t)},\,h_j^{(t)}$ & First/second derivatives of $\ell$ \\[0.2em]

$\lambda,\gamma$ & Regularization, split penalty \\[0.2em]

$B$ & Bins per feature \\
$f\in\{1,\dots,d\}$ & Feature index \\[0.2em]

$F_f^{(t)}$ & Hessian-weighted empirical CDF \\
$E_f^{(t)},\,\tilde E_f^{(t)}$ & Ideal / sketch histogram edges \\
$\alpha$ & CDF (rank) approximation error \\[0.2em]

$\mathbf{b}(x)$ & Bin-ID vector induced by $\tilde E^{(t)}$ \\
$r$ & Atom (bin-ID) key \\
$W_r,\,G_r^{(t)},\,H_r^{(t)}$ & Atom count, gradient, Hessian \\[0.2em]

$(G_L,H_L),(G_R,H_R)$ & Split sufficient statistics \\[0.2em]

$\mathcal{A}_t$ & Nodes evaluated at round $t$ \\
$\mathrm{Gain}$ & Split gain (Eq.~\eqref{eq:gain_precise}) \\[0.2em]

$J_t^{\mathrm{cent}},\,J_t^{\mathrm{scs}}$ & Centralized / SCS-XGB objective \\
\bottomrule
\end{tabular}
\label{tab:notation_scsxgb_precise}
\end{table}

\paragraph{Second-order gain (XGBoost)}
For a candidate split of a node into left/right parts with sufficient statistics
$(G_L,H_L)$ and $(G_R,H_R)$, the XGBoost gain is
\begin{equation}
\begin{aligned}
\mathrm{Gain}(G_L,H_L,G_R,H_R)
:=\;&
\frac12\Bigg(
\frac{G_L^2}{H_L+\lambda}
+\frac{G_R^2}{H_R+\lambda}
\\[-0.35em]
&\quad
-\frac{(G_L+G_R)^2}{(H_L+H_R)+\lambda}
\Bigg)
-\gamma .
\end{aligned}
\label{eq:gain_precise}
\end{equation}

\paragraph{Hessian-weighted quantiles.}
At round $t$, define the Hessian-weighted empirical CDF for feature $f$ as
\begin{equation}
F_f^{(t)}(v)
:=\frac{1}{\sum_{j=1}^N h_j^{(t)}}
\sum_{j:\,x_j^{(f)}\le v} h_j^{(t)} .
\label{eq:weighted_cdf}
\end{equation}
We define ideal histogram edges by the left-quantile convention
\begin{equation}
e_{f,b}^{(t)}
:= \inf\Big\{v:\;F_f^{(t)}(v)\ge b/B\Big\},
\qquad b=0,\dots,B .
\label{eq:left_quantile_def}
\end{equation}
SCS-XGB uses mergeable Hessian-weighted DDSketches to obtain approximate edges
$\tilde E_f^{(t)}=(\tilde e_{f,0}^{(t)}<\cdots<\tilde e_{f,B}^{(t)})$
\cite{masson2019ddsketch}; edges are fixed during split search.

\paragraph{Goal}
For any finite horizon $T$ and tolerance $\varepsilon>0$, we show existence of a
sketch accuracy level $\bar\alpha$ such that $\alpha\le\bar\alpha$ implies
$\lvert J_t^{\mathrm{scs}}-J_t^{\mathrm{cent}}\rvert\le\varepsilon$ for all $t\le T$.

\paragraph{Assumptions.}
\textbf{A1 (Bounded derivatives).}
$\exists\,G_{\max},H_{\max}<\infty$ such that
$|g_j^{(t)}|\le G_{\max}$ and $0\le h_j^{(t)}\le H_{\max}$ for all $j,t$.

\textbf{A2 (Bracketing accuracy).}
$\exists\,\alpha\in(0,1)$ such that for all $t,f,b$,
$$
F_f^{(t)}(\tilde e_{f,b}^{(t)}-) \le \frac{b}{B}+\alpha,
\qquad
F_f^{(t)}(\tilde e_{f,b}^{(t)}) \ge \frac{b}{B}-\alpha .
\label{eq:A2_bracket}
$$

\textbf{A3 (Fixed edges).}
Within each round $t$, the edges $\tilde E^{(t)}$ are fixed during depth-wise
split evaluation.
\textbf{A4 (Deterministic tie-breaking).}
Centralized histogram XGBoost and SCS-XGB break equal-gain ties identically.
\begin{lemma}[Exact surrogate equivalence]
\label{lem:exact_surrogate}
Fix a boosting round $t$ and sketch edges $\tilde E^{(t)}$.
Greedy split evaluation using routed atom statistics is exactly equivalent to
running histogram XGBoost on the induced atom pseudo-dataset with edges
$\tilde E^{(t)}$.
\end{lemma}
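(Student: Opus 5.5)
The plan is to make the ``induced atom pseudo-dataset'' precise and then show, by induction over the depth-wise growth of the trees in round $t$, that both procedures see the same sufficient statistics at every node and so make the same decisions.

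\textbf{Pseudo-dataset.} For each occupied key $r=(r_1,\dots,r_d)$ we create one pseudo-sample. Its $f$-th coordinate is any representative value lying in bin $r_f$ of $\tilde E^{(t)}_f$, for instance $\tilde e^{(t)}_{f,r_f}$. Its gradient and Hessian vectors are $G^{(t)}_{r,\cdot}$ and $H^{(t)}_{r,\cdot}$, and its count is $W_r$. Because histogram XGBoost with edges $\tilde E^{(t)}$ only ever uses a sample's bin index $\textsc{BinIndex}(\tilde E^{(t)}_f,x_f)$, the representative is binned back to $r_f$. The pseudo-sample therefore occupies exactly the histogram cell of every original sample $j$ with $\mathbf{b}(x_j)=r$, and it carries their summed statistics.

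\textbf{Node statistics.} The key claim is that for every node $a$ the multiset of statistics routed to $a$ is the same in both procedures. At the root this holds trivially. Now let a split use feature $f$ and threshold $\theta=\tilde e^{(t)}_{f,q}$. Whether a sample goes left depends only on whether its bin index satisfies $b_f\le q$. This is a function of the key $r$ alone, so all samples sharing an atom move together, and the pseudo-sample moves with them. Conditioning atoms on the node path (the server's routing) therefore gives, for each node, a set of atoms $\mathcal{R}_a$. By linearity of summation, the node's per-bin histogram entries $\sum_{r\in\mathcal{R}_a,\,r_f=b}(G_{r,k},H_{r,k})$ equal the centralized histogram entries over original samples in $a$. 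The merge step $\sum_{c}(W^{(c)}_r,G^{(c)}_r,H^{(c)}_r)$ fits in here because client atoms with equal key add; this is again linearity, since the keys are computed with the common edges fixed by A3.

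\textbf{Decisions.} $\textsc{PrefixStats}$ returns prefix sums over $b\le q$ of these histogram entries. Hence $(G_L,H_L,G_R,H_R)$, and therefore $\mathrm{Gain}$ from Eq.~\eqref{eq:gain_precise} summed over $k$, agree for every candidate $(f,q)$. By A4 the argmax $(f^\star,q^\star)$ coincides, and the leaf weights $-G/(H+\lambda)$ coincide as well. Since the chosen split induces the same partition of atoms, the induction proceeds to the children until depth $D$. Any stopping rule (depth, $\gamma$, or minimum child weight through $W_r$ or $H$) is a function of these same statistics and so also agrees.

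\textbf{Main obstacle.} The delicate step is the routing invariance. It requires that a threshold test on raw features agrees with the bin-index test, i.e.\ $x_f<\tilde e_{f,q}\iff b_f(x)\le q$ under the paper's \textsc{BinIndex} convention. I would fix this convention explicitly, with left-closed bins, and note that A3 guarantees the edges used for both binning and thresholds are identical within the round. Edge values that fall exactly on a boundary are then handled by that convention rather than by any approximation.
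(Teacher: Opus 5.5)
Your proposal is correct and takes the same route as the paper's proof: routing depends only on bin indices, atom aggregation reproduces the left/right sufficient statistics, and the gain is a function of those statistics alone. You spell out details the paper leaves implicit, namely the explicit pseudo-dataset, the induction over depth, and the use of A4 for the argmax; one small slip is that under your own convention $x_f<\tilde e_{f,q}\iff b_f(x)\le q$ the point $\tilde e_{f,r_f}$ falls in bin $r_f+1$, so take $\tilde e_{f,r_f-1}$ (or any point in $[\tilde e_{f,r_f-1},\tilde e_{f,r_f})$) as the representative.
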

\begin{proof}
Fix $t$ and $\tilde E^{(t)}$. For any node and candidate split $(f,k)$, routing
depends only on bin indices and is therefore deterministic.
Aggregating $(G_r^{(t)},H_r^{(t)},W_r)$ over atom keys yields exactly the
left/right sufficient statistics that histogram XGBoost computes on the
pseudo-dataset with the same edges.
Since the gain~\eqref{eq:gain_precise} depends only on these statistics, all
candidate gains and the greedy split selection coincide. \qedhere
\end{proof}
\begin{lemma}[Hessian prefix-mass perturbation]
\label{lem:hessian_prefix}
Hold A1--A2. Fix $t$ and feature $f$.
Define the ideal and sketch-induced Hessian prefix masses
$$
H_{\le b}^{(t)}
:= \sum_{j:\,x_j^{(f)}\le e_{f,b}^{(t)}} h_j^{(t)},
\qquad
\tilde H_{\le b}^{(t)}
:= \sum_{j:\,x_j^{(f)}\le \tilde e_{f,b}^{(t)}} h_j^{(t)} .
$$
Then, for all $b\in\{0,\dots,B\}$,
\[
\big|\tilde H_{\le b}^{(t)}-H_{\le b}^{(t)}\big|
\le \alpha \sum_{j=1}^N h_j^{(t)} .
\]
\end{lemma}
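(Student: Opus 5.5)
The plan is to rewrite both prefix masses in terms of the Hessian-weighted CDF~\eqref{eq:weighted_cdf}. With $S^{(t)}:=\sum_{j=1}^N h_j^{(t)}$, the definitions give directly
\[
H_{\le b}^{(t)} = S^{(t)}\,F_f^{(t)}(e_{f,b}^{(t)}),
\qquad
\tilde H_{\le b}^{(t)} = S^{(t)}\,F_f^{(t)}(\tilde e_{f,b}^{(t)}).
\]
The claim is therefore equivalent to the rank statement $|F_f^{(t)}(\tilde e_{f,b}^{(t)})-F_f^{(t)}(e_{f,b}^{(t)})|\le\alpha$. The degenerate case $S^{(t)}=0$ is trivial, since both sides vanish; A1 guarantees $h_j^{(t)}\ge 0$, so $F_f^{(t)}$ is a genuine nondecreasing, right-continuous step CDF.

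First I record the two-sided bracket for the ideal edge from the left-quantile definition~\eqref{eq:left_quantile_def}. Right-continuity gives $F_f^{(t)}(e_{f,b}^{(t)})\ge b/B$, and the infimum property gives $F_f^{(t)}(e_{f,b}^{(t)}-)\le b/B$. Next I split into the cases $\tilde e_{f,b}^{(t)}\le e_{f,b}^{(t)}$ and $\tilde e_{f,b}^{(t)}> e_{f,b}^{(t)}$, and use monotonicity of $F_f^{(t)}$ together with the bracketing assumption A2 in each case.

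\textbf{Case $\tilde e\le e$.} Here $F(e)-F(\tilde e)\ge 0$. The lower bracket $F(\tilde e)\ge b/B-\alpha$ bounds this difference by $F(e)-b/B+\alpha$.

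\textbf{Case $\tilde e> e$.} Here $F(\tilde e)\ge F(e)$. We have $F(\tilde e-)\le b/B+\alpha$ and $F(e)\ge b/B$, which give $F(\tilde e-)-F(e)\le\alpha$. The remaining piece is the jump $F(\tilde e)-F(\tilde e-)$.

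The main obstacle is exactly these jump terms: $F(e)-b/B$ in the first case, and the atom of Hessian mass sitting at $\tilde e$ in the second. A2 controls only $F(\tilde e-)$ from above and $F(\tilde e)$ from below, so a single feature value carrying substantial Hessian weight, which is typical for ties in discretized sensor features, could break the stated bound. My first attempt will be to show that the jumps cancel under the natural convention that the DDSketch edges, like the ideal edges, are left quantiles of the merged weighted sketch. In that case $\tilde e$ and $e$ either coincide on the atom, giving zero difference, or are separated so that the relevant one-sided limits apply. If that fails, I will make explicit a mild regularity condition, namely that no single value of feature $f$ carries Hessian mass exceeding the residual slack. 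Equivalently, I will read A2 as the two-sided bracket $|F(\tilde e)-b/B|\le\alpha$, with $F(e)=b/B$ holding for the ideal edge in the atomless (interpolated) case. The conclusion then follows from the triangle inequality $|F(\tilde e)-F(e)|\le|F(\tilde e)-b/B|+|b/B-F(e)|\le\alpha$. Multiplying by $S^{(t)}$ yields the lemma for every $b\in\{0,\dots,B\}$; the endpoints $b=0,B$ are immediate.
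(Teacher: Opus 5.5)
Your reduction to the rank statement $|F(\tilde e_{f,b}^{(t)})-F(e_{f,b}^{(t)})|\le\alpha$ (writing $F=F_f^{(t)}$) and your bracket $F(e_{f,b}^{(t)}-)\le b/B\le F(e_{f,b}^{(t)})$ follow exactly the paper's route. The paper then simply asserts that combining this bracket with A2 gives the rank bound. Your case analysis correctly shows why that step fails, and the obstruction is real: under A1--A2 the lemma is false, so neither your proof nor the paper's can be completed as stated.

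Take two feature values $1<2$, each carrying half of the Hessian mass (e.g.\ $N=2$, $x_1^{(f)}=1$, $x_2^{(f)}=2$, $h_1^{(t)}=h_2^{(t)}=1$), and $B=2$. Then $e_{f,1}^{(t)}=1$ and $H_{\le 1}^{(t)}=1$. The strictly increasing edges $\tilde e_{f,0}^{(t)}=1$, $\tilde e_{f,1}^{(t)}=2$, $\tilde e_{f,2}^{(t)}=3$ satisfy A2 for \emph{every} $\alpha\in(0,1)$; at $b=1$, for instance, $F(2-)=\tfrac12\le\tfrac12+\alpha$ and $F(2)=1\ge\tfrac12-\alpha$. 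Yet $\tilde H_{\le 1}^{(t)}=2$, so the error is $\tfrac12\sum_j h_j^{(t)}$, which exceeds $\alpha\sum_j h_j^{(t)}$ for all $\alpha<\tfrac12$.

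The same example also refutes your remark that $b=0$ is immediate. With the literal definition~\eqref{eq:left_quantile_def}, $e_{f,0}^{(t)}=-\infty$, so these edges give $\tilde H_{\le 0}^{(t)}=1\neq 0=H_{\le 0}^{(t)}$. A2 controls only $F(\tilde e_{f,0}^{(t)}-)$, not the atom at $\tilde e_{f,0}^{(t)}$.

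Consequently neither of your continuations closes the gap. The hope that the jumps cancel when sketch edges are left quantiles of the merged sketch fails on this very example. Here $\tilde e_{f,1}^{(t)}=2$ is a data value, and it is the left $\tfrac12$-quantile of any merged sketch whose estimate of $F(1)$ falls slightly below $\tfrac12$. An arbitrarily small rank error therefore moves the edge across a whole atom.

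Your fallback changes the hypotheses rather than proving the lemma. The two-sided reading $|F(\tilde e)-b/B|\le\alpha$ is strictly stronger than A2. Also, $F(e_{f,b}^{(t)})=b/B$ fails generically, because the Hessian-weighted empirical CDF is a pure step function. Interpolating $F$ would restore that identity, but then $H_{\le b}^{(t)}$, which is a sum over data points, no longer equals $(\sum_j h_j^{(t)})\,F(e_{f,b}^{(t)})$.

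What your case analysis actually proves, and what the lemma should state, is
\[
\bigl|\tilde H_{\le b}^{(t)}-H_{\le b}^{(t)}\bigr|\le\bigl(\alpha+\delta_f^{(t)}\bigr)\sum_{j=1}^N h_j^{(t)},
\qquad
\delta_f^{(t)}:=\max_{v}\frac{\sum_{j:\,x_j^{(f)}=v}h_j^{(t)}}{\sum_{j=1}^N h_j^{(t)}} .
\]
This holds because your two leftover terms are $F(e)-b/B\le F(e)-F(e-)$ and $F(\tilde e)-F(\tilde e-)$, and each is at most $\delta_f^{(t)}$.

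Since $\delta_f^{(t)}$ does not shrink with $\alpha$, the $\alpha\to0$ conclusion of Theorem~\ref{thm:finite_eps} needs an extra hypothesis, such as vanishing atoms or an edge rule guaranteeing that $\tilde e$ and $e$ induce the same partition of the data. In the two-point example the sketch edges admit no separating split for any $\alpha$, while the ideal edge does. For suitable gradients, the objective gap is therefore independent of $\alpha$.
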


\begin{proof}
By Definition~\eqref{eq:weighted_cdf},
\[
\tilde H_{\le b}^{(t)}
= \Big(\sum_j h_j^{(t)}\Big) F_f^{(t)}(\tilde e_{f,b}^{(t)}),
\quad
H_{\le b}^{(t)}
= \Big(\sum_j h_j^{(t)}\Big) F_f^{(t)}(e_{f,b}^{(t)}).
\]
Under the left-quantile convention~\eqref{eq:left_quantile_def},
$F_f^{(t)}(e_{f,b}^{(t)}) \ge b/B$ and
$F_f^{(t)}(e_{f,b}^{(t)}-) < b/B$.
Combining this with the bracketing condition \emph{A2} yields
\[
\big|F_f^{(t)}(\tilde e_{f,b}^{(t)})
      -F_f^{(t)}(e_{f,b}^{(t)})\big|
\le \alpha ,
\]
and multiplying by $\sum_j h_j^{(t)}$ gives the stated bound. \qedhere
\end{proof}

\paragraph{Definition (Node-wise statistic perturbation constant)}
Fix a boosting round $t$ and node $a$.
Let $\mathbf{s}_{t,a}(f,k)$ and $\tilde{\mathbf{s}}_{t,a}(f,k)$ denote the ideal
and sketch-induced sufficient statistics for candidate split $(f,k)$.
Define $K_{t,a}\ge0$ such that
\begin{equation}
\sup_{(f,k)}
\big\|\tilde{\mathbf{s}}_{t,a}(f,k)-\mathbf{s}_{t,a}(f,k)\big\|_1
\le K_{t,a}\,\alpha .
\label{eq:K_def}
\end{equation}
Such a finite constant exists for fixed data, $t$, and $a$ whenever statistic
perturbations scale linearly in ~$\alpha$ for sufficiently small ~$\alpha$.


\begin{lemma}[Lipschitz continuity of the gain]
\label{lem:gain_lipschitz}
Fix $\lambda>0$.
The gain functional~\eqref{eq:gain_precise} is Lipschitz continuous on any bounded
domain $|G|\le\bar G$, $0\le H\le\bar H$, with
\[
\big|\mathrm{Gain}(\mathbf{s})-\mathrm{Gain}(\mathbf{s}')\big|
\le L(\bar G,\bar H,\lambda)\,
\|\mathbf{s}-\mathbf{s}'\|_1 .
\]
\end{lemma}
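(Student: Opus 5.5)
The plan is to view $\mathrm{Gain}$ in \eqref{eq:gain_precise} as a function of $\mathbf{s}=(G_L,H_L,G_R,H_R)$. It is built from three copies of one scalar map,
\[
\phi(G,H):=\frac{G^2}{H+\lambda},
\]
evaluated at $(G_L,H_L)$, $(G_R,H_R)$ and $(G_L+G_R,H_L+H_R)$. The constant $-\gamma$ cancels in differences. The argument therefore reduces to a Lipschitz bound for $\phi$ on a bounded rectangle, followed by the triangle inequality. We take the domain to be $|G_L|,|G_R|\le\bar G$ and $0\le H_L,H_R\le\bar H$. The combined statistic then lies in $|G|\le 2\bar G$, $0\le H\le 2\bar H$, so $\phi$ is needed on this enlarged rectangle $R$.

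\textbf{Step 1 (Lipschitz bound for $\phi$).} Since $\lambda>0$ and $H\ge 0$, the denominator satisfies $H+\lambda\ge\lambda>0$. Hence $\phi$ is $C^1$ on a neighbourhood of the convex set $R$, with
\[
\partial_G\phi=\frac{2G}{H+\lambda},\qquad \partial_H\phi=-\frac{G^2}{(H+\lambda)^2}.
\]
On $R$ these satisfy $|\partial_G\phi|\le 4\bar G/\lambda$ and $|\partial_H\phi|\le 4\bar G^2/\lambda^2$. By the mean value theorem along the segment joining two points of $R$ (convexity keeps the segment inside $R$),
\[
|\phi(G,H)-\phi(G',H')|\le \ell\,(|G-G'|+|H-H'|),\qquad \ell:=\max\{4\bar G/\lambda,\;4\bar G^2/\lambda^2\}.
\]

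\textbf{Step 2 (assembling the gain).} We write
\[
\mathrm{Gain}(\mathbf{s})-\mathrm{Gain}(\mathbf{s}')=\tfrac12\big[\Delta_L+\Delta_R-\Delta_P\big],
\]
where each $\Delta$ is a difference of $\phi$-values. Step 1 gives $|\Delta_L|\le \ell(|G_L-G_L'|+|H_L-H_L'|)$ and likewise for $\Delta_R$. For the parent term, $|(G_L+G_R)-(G_L'+G_R')|\le |G_L-G_L'|+|G_R-G_R'|$, and the same holds for $H$, so $|\Delta_P|\le \ell\|\mathbf{s}-\mathbf{s}'\|_1$. Summing the three bounds yields
\[
|\mathrm{Gain}(\mathbf{s})-\mathrm{Gain}(\mathbf{s}')|\le \ell\,\|\mathbf{s}-\mathbf{s}'\|_1 ,
\]
so we may take $L(\bar G,\bar H,\lambda)=\ell$. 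This constant in fact depends only on $\bar G$ and $\lambda$.

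\textbf{Remarks.} In the multiclass case the gain used in Algorithm~\ref{alg:scs_server_1col} is a sum over $k$ of such terms. The same bound then holds with the $\ell_1$ norm over all class-wise statistics, by summing the per-class bounds.

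The only delicate point is that the parent statistic leaves the original box. Step 1 handles this by working on the doubled rectangle. It also uses $\lambda>0$ essentially: without it, $\partial_H\phi$ blows up as $H\to0$ and no uniform Lipschitz constant exists. Everything else is routine.
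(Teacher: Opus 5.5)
Your proposal is correct and follows the paper's argument: denominators bounded below by $\lambda>0$ give bounded partial derivatives, and the mean value theorem then yields the Lipschitz bound. You make it explicit, computing $L=\max\{4\bar G/\lambda,\,4\bar G^2/\lambda^2\}$ and handling the parent statistic by enlarging the box, a detail the paper leaves implicit.
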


\begin{proof}
On the bounded domain, all denominators are bounded below by $\lambda>0$, hence all
partial derivatives are bounded. The result follows from the mean value theorem.
\qedhere
\end{proof}

\paragraph{Definition (Node-wise gain sensitivity).}
For each $(t,a)$ define
\[
C_{t,a}
:= L(\bar G_{t,a},\bar H_{t,a},\lambda)\,K_{t,a},
\]
where $\bar G_{t,a}$ and $\bar H_{t,a}$ bound all candidate split statistics at
node $a$.


\begin{lemma}[Uniform gain perturbation]
\label{lem:uniform_gain_perturb}
For all candidate splits $(f,k)$ at node $a$,
\[
\big|
\mathrm{Gain}(\tilde{\mathbf{s}}_{t,a}(f,k))
-
\mathrm{Gain}(\mathbf{s}_{t,a}(f,k))
\big|
\le C_{t,a}\,\alpha .
\]
\end{lemma}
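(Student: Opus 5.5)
The plan is to combine the Lipschitz bound of Lemma~\ref{lem:gain_lipschitz} with the statistic perturbation bound \eqref{eq:K_def}, so the argument is essentially a composition of two estimates. Fix a round $t$ and a node $a$, and let $(f,k)$ be any candidate split at $a$. By the definition of $\bar G_{t,a}$ and $\bar H_{t,a}$, both the ideal statistics $\mathbf{s}_{t,a}(f,k)$ and the sketch-induced statistics $\tilde{\mathbf{s}}_{t,a}(f,k)$ lie in the bounded domain $\{|G|\le\bar G_{t,a},\;0\le H\le\bar H_{t,a}\}$.

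The first step checks this domain membership, which is the only point needing care. I would argue from A1: every left or right statistic is a partial sum over at most $N$ samples, so $|G|\le N G_{\max}$ and $0\le H\le N H_{\max}$. Hence one may take $\bar G_{t,a}\le N G_{\max}$ and $\bar H_{t,a}\le N H_{\max}$, uniformly in $(f,k)$ and in whether the edges are ideal or sketched. This guarantees that a single constant $L(\bar G_{t,a},\bar H_{t,a},\lambda)$ is valid for both arguments. The parent statistics $(G_L+G_R,\,H_L+H_R)$ also stay within bounded sets, at most twice the bounds, which I absorb into $\bar G_{t,a},\bar H_{t,a}$. This absorption is needed because the domain must be convex for the mean value theorem used in Lemma~\ref{lem:gain_lipschitz}. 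A box in $(G_L,H_L,G_R,H_R)$ is convex, so the segment joining $\mathbf{s}$ and $\tilde{\mathbf{s}}$ stays inside it.

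The second step applies Lemma~\ref{lem:gain_lipschitz} directly:
\[
\big|\mathrm{Gain}(\tilde{\mathbf{s}}_{t,a}(f,k))-\mathrm{Gain}(\mathbf{s}_{t,a}(f,k))\big|
\le L(\bar G_{t,a},\bar H_{t,a},\lambda)\,\big\|\tilde{\mathbf{s}}_{t,a}(f,k)-\mathbf{s}_{t,a}(f,k)\big\|_1 .
\]
Note that the constant $-\gamma$ cancels in this difference.

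The third step bounds the norm on the right by its supremum over $(f,k)$ and invokes \eqref{eq:K_def} to get $\le K_{t,a}\alpha$. Since $C_{t,a}=L(\bar G_{t,a},\bar H_{t,a},\lambda)K_{t,a}$, the claimed bound follows, uniformly over all candidate splits.

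For the multiclass case, the gain is a sum over $k$ of per-class gains. There I would apply the same argument classwise and sum, which only enlarges $L$ by a factor of $K$ that can be absorbed into $C_{t,a}$. The main obstacle is conceptual rather than technical: the result is only as strong as the existence of $K_{t,a}$ in \eqref{eq:K_def}. I would take that existence as given by the definition, possibly noting that Lemma~\ref{lem:hessian_prefix} motivates it for Hessian prefix masses.
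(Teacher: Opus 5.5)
Your proposal is correct and matches the paper. The paper gives no separate proof, since the lemma is immediate from Lemma~\ref{lem:gain_lipschitz}, the bound \eqref{eq:K_def}, and the definition $C_{t,a}=L(\bar G_{t,a},\bar H_{t,a},\lambda)K_{t,a}$ --- exactly your composition --- and your A1-based check that both $\mathbf{s}_{t,a}(f,k)$ and $\tilde{\mathbf{s}}_{t,a}(f,k)$ lie in one common box is care the paper leaves implicit (in the multiclass case you do not even need the extra factor of the class count if $\mathbf{s}$ stacks all classes' statistics, because the $\ell_1$ norm is additive over classes).
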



\begin{theorem}[Finite-horizon $\varepsilon$-approximation]
\label{thm:finite_eps}
Hold A1--A4. For any $\varepsilon>0$ and finite horizon $T$, there exists
$\bar\alpha>0$ such that for all $\alpha\le\bar\alpha$ and all $t\le T$,
\[
\big|J_t^{\mathrm{scs}}-J_t^{\mathrm{cent}}\big|\le \varepsilon .
\]
\end{theorem}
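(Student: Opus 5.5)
The plan is a finite induction over boosting rounds $t=0,1,\dots,T$, using a margin argument to show that for small $\alpha$ the sketch-based greedy choices coincide with the centralized histogram choices. Once the choices coincide, the objectives differ only through a bounded perturbation of leaf statistics, and that perturbation is linear in $\alpha$. Throughout, $J_t^{\mathrm{cent}}$ is the objective of centralized histogram XGBoost run with the ideal edges $E^{(t)}$, and $J_t^{\mathrm{scs}}$ is the objective of SCS-XGB run with the sketch edges $\tilde E^{(t)}$. By Lemma~\ref{lem:exact_surrogate}, SCS-XGB is exactly histogram XGBoost on the atom pseudo-dataset with edges $\tilde E^{(t)}$. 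Hence it suffices to compare two histogram-XGBoost runs that differ only in their edges.

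The inductive hypothesis $\mathcal{H}_{t-1}$ states that both models agree after $t-1$ rounds in tree structure, and that their margins differ by at most $c_{t-1}\alpha$ per sample. Under $\mathcal{H}_{t-1}$, A1 and the $C^2$ loss give per-sample gradient and Hessian differences of order $\alpha$. Lemma~\ref{lem:hessian_prefix} then bounds the prefix-mass discrepancy caused by edge displacement. The analogous argument applies to gradient prefix masses, using $|g_j|\le G_{\max}$ and the fact that the samples moved across an edge carry Hessian mass at most $\alpha\sum_j h_j$. Together these justify the constants $K_{t,a}$ in~\eqref{eq:K_def}.

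Lemma~\ref{lem:gain_lipschitz} and Lemma~\ref{lem:uniform_gain_perturb} then give $|\mathrm{Gain}(\tilde{\mathbf s})-\mathrm{Gain}(\mathbf s)|\le C_{t,a}\alpha$ uniformly over the finitely many candidates $(f,k)$ at each node. The candidate index $q\in\{1,\dots,B-1\}$ is common to both runs. For the node-wise margin, define
\[
\Delta_{t,a}:=\mathrm{Gain}(\mathbf s_{t,a}(f^\star,k^\star))-\max_{(f,k)\neq(f^\star,k^\star)}\mathrm{Gain}(\mathbf s_{t,a}(f,k)).
\]
If $\Delta_{t,a}>0$ and $2C_{t,a}\alpha<\Delta_{t,a}$, both runs pick the same split index. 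The same margin argument covers the decision whether to split at all: compare against the no-split value $0$, i.e.\ the sign of the gain after $\gamma$.

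The main obstacle is the case $\Delta_{t,a}=0$, where exact ties occur in the centralized run. I would first try to handle it with A4: a perturbation can break a tie differently, so I would argue that tied candidates yield the same objective value to first order. If that fails, I would restrict the statement to data where the minimum of $\Delta_{t,a}$ over the finitely many nodes up to depth $m$ and rounds up to $T$ is positive, or absorb the tie case into $\varepsilon$ via continuity of the optimal value. The optimal value $\max_{(f,k)}\mathrm{Gain}$ is $C_{t,a}\alpha$-Lipschitz in the statistics even when the argmax is not, which is the fallback for the objective bound.

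With identical structure, leaf weights $w=-G/(H+\lambda)$ are Lipschitz in $(G,H)$ because $\lambda>0$. Therefore the new margins differ by at most $c_t\alpha$, with $c_t$ depending on $c_{t-1}$, $\eta$, $\lambda$, $G_{\max}$, $H_{\max}$, $m$, $K$ and $N$. This establishes $\mathcal{H}_t$. The objective is continuous in the margins on the bounded set visited over $T$ rounds. It is a sum of $C^2$ losses plus regularization terms, and the regularization terms are identical given identical structure and Lipschitz in the leaf weights. This gives $|J_t^{\mathrm{scs}}-J_t^{\mathrm{cent}}|\le L_J c_t\alpha$.

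To conclude, set
\[
\bar\alpha:=\min\Big\{\min_{t\le T,\,a\in\mathcal A_t}\frac{\Delta_{t,a}}{2C_{t,a}},\ \frac{\varepsilon}{L_J\max_{t\le T}c_t}\Big\}.
\]
This minimum is taken over finitely many quantities and is positive. Every $\alpha\le\bar\alpha$ then yields the claim for all $t\le T$.
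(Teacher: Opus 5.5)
Your route differs from the paper's, and it has a genuine gap at the leaf-update step. Agreeing on the split \emph{index} $(f,q)$ is not the same as agreeing on the \emph{partition}. The centralized tree thresholds at $e^{(t)}_{f,q}$ and SCS-XGB at $\tilde e^{(t)}_{f,q}$, so every sample whose $x^{(f)}$ lies between the two goes to the opposite child. Its margin then changes by the difference of the leaf values it reaches in the two trees. That difference is a fixed, data-dependent quantity, not $O(\alpha)$. Already at the last level, every split with positive gain has $w_L\neq w_R$ for some class, because for $w_L=w_R$ the bracket in \eqref{eq:gain_precise} is non-positive. So $\mathcal{H}_t$ does not follow, and neither do the $O(\alpha)$ gradient/Hessian perturbations in round $t+1$ or the final $L_Jc_t\alpha$ bound.

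Lemma~\ref{lem:hessian_prefix} cannot repair this. It controls only the \emph{Hessian mass} of the rerouted samples, not their number, gradients or losses. A1 allows $h_j$ to be zero or arbitrarily small while $\lvert g_j\rvert=G_{\max}$; under softmax, a confidently misclassified window has $h\approx0$ and $\lvert g\rvert\approx1$. The same conflation invalidates your justification of the gradient component of $K_{t,a}$.

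What you actually need is that, for small $\alpha$, \emph{no} sample is rerouted. For fixed finite data you can get this directly. $F^{(t)}_f$ takes finitely many values, so for $\alpha$ small enough A2 forces $\tilde e^{(t)}_{f,b}$ to be an exact weighted $b/B$-quantile. That pins it to $e^{(t)}_{f,b}$ whenever no value of $F^{(t)}_f$ equals $b/B$. The two runs then coincide round by round, and the Lipschitz/margin machinery is unnecessary.

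The second gap is the tie case, which you leave open. Your $\bar\alpha$ contains $\min_{t,a}\Delta_{t,a}/(2C_{t,a})$, which is $0$ as soon as one centralized node has an exact tie. Such ties are not exotic:
\begin{itemize}
\item bins that are empty at a node give adjacent candidates identical statistics;
\item the feature set here contains both the standard deviation and the variance of a channel, which are monotone transforms of each other. Their weighted-quantile bins therefore induce the same partitions, and every candidate on one ties exactly with its counterpart on the other.
\end{itemize}
A4 does not help, since it only governs ties that both runs see. The sketch-perturbed gains can strictly prefer the other candidate, and then your structural induction no longer applies. Restricting to $\Delta_{t,a}>0$ changes the theorem.

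Your last fallback, that $\max_{(f,k)}\mathrm{Gain}$ is $C_{t,a}\alpha$-Lipschitz even when the argmax is not, is exactly the paper's proof. The paper never claims identical structure. It uses Lemma~\ref{lem:uniform_gain_perturb} only to show that the split SCS-XGB selects at each node is within $2C_{t,a}\alpha$ of the best ideal gain. It then sums over $\mathcal{A}_t$ to get $C_t\alpha$ and sets $\bar\alpha=\varepsilon/C$. Taking that route means giving up $\mathcal{H}_t$ and arguing at the level of objective values, as the paper does. Note that the paper's node-wise summation is itself silent on how differences propagate into later rounds, which is precisely what your induction was trying to control.
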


\begin{proof}
For node $a$ at round $t$, uniform gain perturbation implies ideal gain
suboptimality at most $2C_{t,a}\alpha$.
Since each split reduces the objective by exactly its gain, summing over nodes
yields
\[
\big|J_t^{\mathrm{scs}}-J_t^{\mathrm{cent}}\big|
\le \sum_{a\in\mathcal{A}_t} 2C_{t,a}\alpha
=: C_t\alpha .
\]
With $C:=\max_{t\le T} C_t<\infty$, choosing $\bar\alpha=\varepsilon/C$ proves the
claim. \qedhere
\end{proof}

\paragraph{Interpretation}
Assumption~A2 can be satisfied for arbitrarily small $\alpha$ by choosing
sufficiently accurate DDSketch hyperparameters.
Thus, for any fixed finite horizon $T$, SCS-XGB converges to the centralized
histogram XGBoost solution in objective value as $\alpha\to0$, independently of
the underlying data distribution.

\section{Results}\label{sec:results}
\subsection{Implementation}
To obtain a realistic and functionally faithful emulation of a deployable \ac{dml} system, we implemented \emph{FedSCS-XGB} on top of the Flower federated learning framework~\cite{beutel_flower_2022} in Python. We rely on Flower’s standard \texttt{server\_app} / \texttt{client\_app} architecture together with its built-in simulation capabilities, which enable controlled experimentation under reproducible client participation and communication patterns.

As a centralized baseline, we use the reference Python implementation of XGBoost~\cite{chen2016xgboost}. In addition, we implemented the Party-Adaptive XGBoost (PAX) protocol according to the pseudo-algorithm and design principles described in the original work~\cite{ong_adaptive_2020}, again using Flower as the orchestration layer.

A key advantage of Flower is its high degree of modularity. This allowed us to clearly separate (i) the distributed machine learning logic, (ii) the data processing pipeline, (iii) the protocol-level communication, control, and orchestration mechanisms, and (iv) the simulation environment. As a result, the implementation remains close to a real world deployment setting while enabling systematic comparison of different distributed learning protocols under identical experimental conditions.

\subsection{Experiments}

\begin{table*}[t]
\centering
\footnotesize
\caption{Client-wise accuracy and F1-score for centralized XGBoost (baseline),
FedSCS-XGB (scs), and PAX across histogram bin counts.}
\label{tab:pax:massive_bins_metrics}

{\setlength{\tabcolsep}{15pt}%
\renewcommand{\arraystretch}{0.9}%
\setlength{\aboverulesep}{0.2ex}%
\setlength{\belowrulesep}{0.2ex}%
\begin{tabular*}{\textwidth}{@{\extracolsep{1.2pt}}ll S S S S S S S S}
\toprule
 &  &
 \multicolumn{8}{c}{Histogram bins} \\
\cmidrule(lr){3-10}
Client & Method
& \multicolumn{2}{c}{64}
& \multicolumn{2}{c}{128}
& \multicolumn{2}{c}{256}
& \multicolumn{2}{c}{512} \\
\cmidrule(lr){3-4}\cmidrule(lr){5-6}\cmidrule(lr){7-8}\cmidrule(lr){9-10}
  &  &
 \multicolumn{1}{c}{Acc.} & \multicolumn{1}{c}{F1}
 & \multicolumn{1}{c}{Acc.} & \multicolumn{1}{c}{F1}
 & \multicolumn{1}{c}{Acc.} & \multicolumn{1}{c}{F1}
 & \multicolumn{1}{c}{Acc.} & \multicolumn{1}{c}{F1} \\
\midrule

patient01 & baseline & 92.66 & 89.86 & 93.01 & 90.10 & 93.28 & 90.90 & 93.10 & 90.39 \\
          & scs      & 90.83 & 87.34 & 91.18 & 86.70 & 91.44 & 88.54 & 91.44 & 88.54 \\
          & pax      & 89.26 & 86.14 & 89.26 & 86.14 & 89.35 & 86.24 & 85.59 & 81.56 \\
\addlinespace

patient02 & baseline & 94.34 & 91.46 & 93.93 & 90.68 & 94.43 & 91.33 & 94.18 & 91.22 \\
          & scs      & \textbf{94.51} & 91.45 & \textbf{95.25} & \textbf{92.17} & \textbf{94.34} & 89.81 & 94.34 & 89.81 \\
          & pax      & 92.71 & 88.71 & 92.71 & 88.71 & 93.03 & 88.98 & 92.30 & 87.97 \\
\addlinespace

patient03 & baseline & 97.31 & 92.87 & 97.02 & 94.00 & 96.73 & 92.83 & 96.92 & 93.29 \\
          & scs      & 96.64 & 91.96 & 96.73 & 90.98 & \textbf{96.92} & 91.73 & \textbf{96.92} & 91.73 \\
          & pax      & 96.35 & 92.20 & 96.35 & 92.20 & 96.92 & 92.93 & 96.06 & 91.00 \\
\addlinespace

patient04 & baseline & 93.59 & 92.95 & 93.77 & 93.13 & 93.59 & 93.01 & 93.59 & 92.83 \\
          & scs      & 90.74 & 88.50 & 92.64 & 90.99 & 90.82 & 87.90 & 90.82 & 87.90 \\
          & pax      & 91.17 & 89.87 & 91.17 & 89.87 & 90.13 & 88.59 & 88.14 & 85.84 \\
\addlinespace

patient05 & baseline & 96.97 & 85.63 & 97.24 & 85.40 & 97.61 & 85.97 & 97.70 & 87.14 \\
          & scs      & 96.23 & 84.19 & 96.78 & 84.89 & 96.23 & 84.95 & 96.23 & 84.95 \\
          & pax      & 96.88 & 83.09 & 96.88 & 83.09 & 95.13 & 80.50 & 88.05 & 71.75 \\
\addlinespace

patient06 & baseline & 97.11 & 92.52 & 97.48 & 92.49 & 97.02 & 91.48 & 97.48 & 92.45 \\
          & scs      & 96.65 & 92.17 & 94.97 & 91.02 & 95.15 & 90.11 & 95.15 & 90.11 \\
          & pax      & 95.06 & 89.98 & 95.06 & 89.98 & 95.25 & 88.60 & 94.97 & 89.45 \\
\addlinespace

patient07 & baseline & 96.51 & 91.99 & 96.51 & 92.83 & 96.42 & 92.91 & 96.42 & 92.90 \\
          & scs      & 96.06 & 91.43 & \textbf{96.51} & 92.08 & {95.61} & 89.92 & 95.61 & 89.92 \\
          & pax      & 93.37 & 87.67 & 93.37 & 87.67 & 94.62 & 90.54 & 91.49 & 84.97 \\
\addlinespace

patient08 & baseline & 96.22 & 91.28 & 97.00 & 93.45 & 97.09 & 93.10 & 96.51 & 91.22 \\
          & scs      & 95.74 & \textbf{91.78} & 96.61 & 92.92 & 94.86 & 88.77 & 94.86 & 88.77 \\
          & pax      & 96.42 & 91.45 & 96.42 & 91.45 & 96.22 & 92.54 & 94.77 & 89.74 \\
\midrule

ALL & baseline
& 95.59$\pm$1.79 & 91.07$\pm$2.42
& 95.75$\pm$1.84 & 91.51$\pm$2.81
& 95.77$\pm$1.72 & 91.44$\pm$2.38
& 95.74$\pm$1.83 & 91.43$\pm$2.01 \\
    & scs
& 94.67$\pm$2.49 & 89.85$\pm$2.90
& 95.08$\pm$2.11 & 90.22$\pm$2.86
& 94.42$\pm$2.19 & 88.97$\pm$2.00
& 94.42$\pm$2.19 & 88.97$\pm$2.00 \\
    & pax
& 93.90$\pm$2.76 & 88.64$\pm$2.97
& 93.90$\pm$2.76 & 88.64$\pm$2.97
& 93.83$\pm$2.78 & 88.62$\pm$3.95
& 91.42$\pm$3.82 & 85.28$\pm$6.26 \\
\bottomrule
\end{tabular*}
} 
\end{table*}



\begin{figure*}[t]
    \centering
    \subfloat[Acc.\label{fig:acc_overlap}]{
        \includegraphics[width=0.48\textwidth]{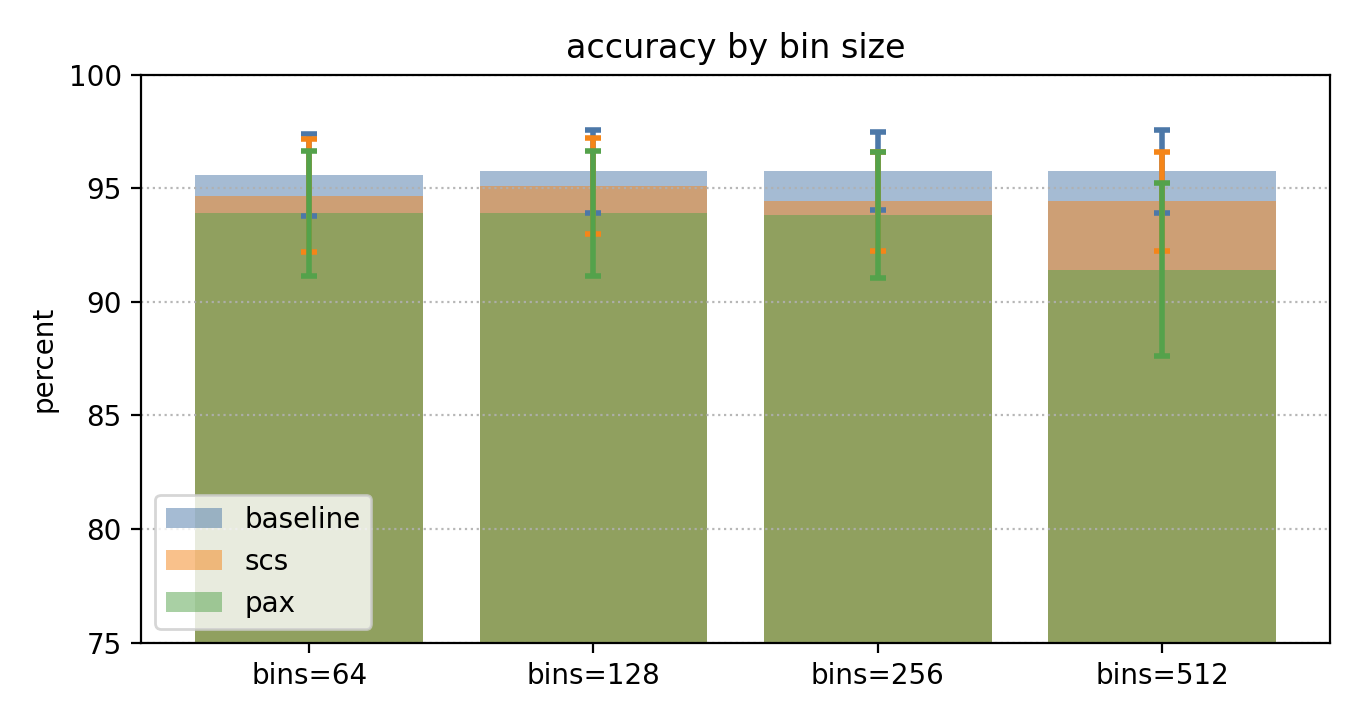}
    }
    \hfill
    \subfloat[F1.]{
        \includegraphics[width=0.48\textwidth]{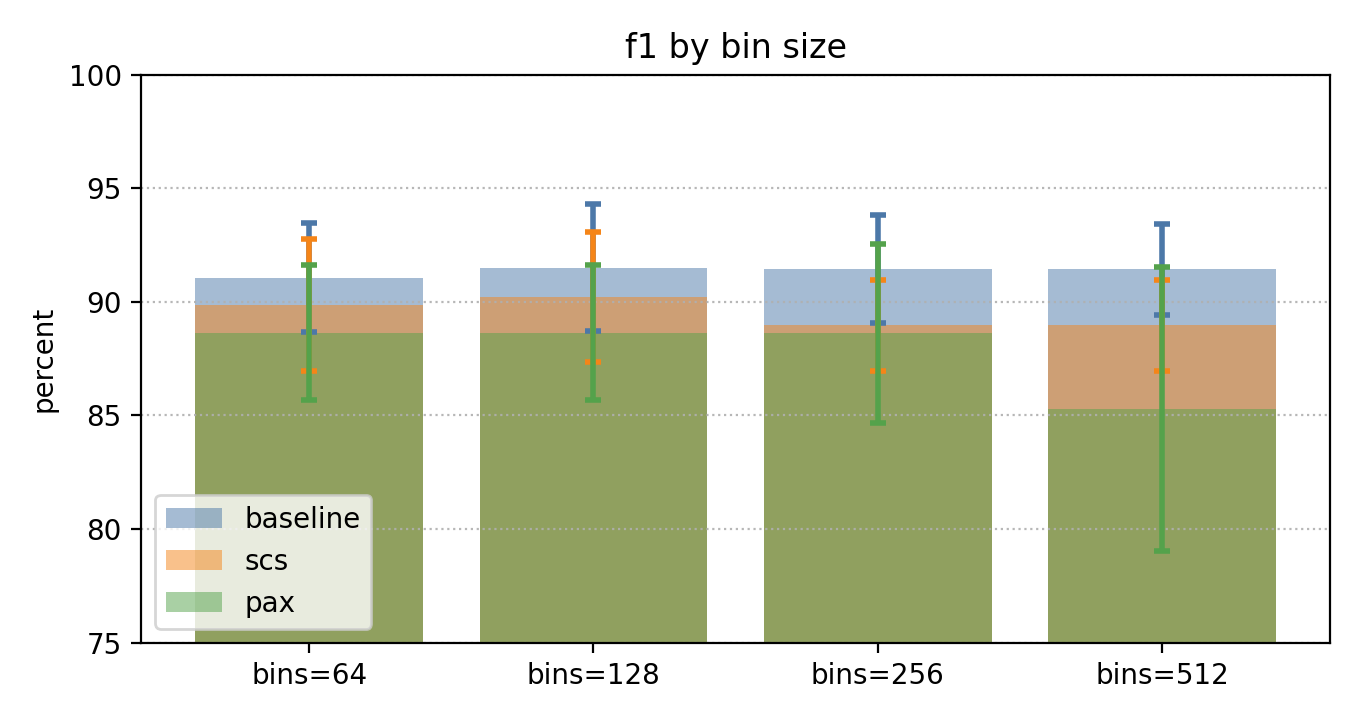}
    }
    \caption{Performance illustration of FedSCS, PAX and XGBoost-baseline for varying bin sizes.}
    \label{fig:diff_comp}
\end{figure*}

\begin{figure*}[t]
    \centering
    \subfloat[Acc.\label{fig:acc_overlap}]{
        \includegraphics[width=0.48\textwidth]{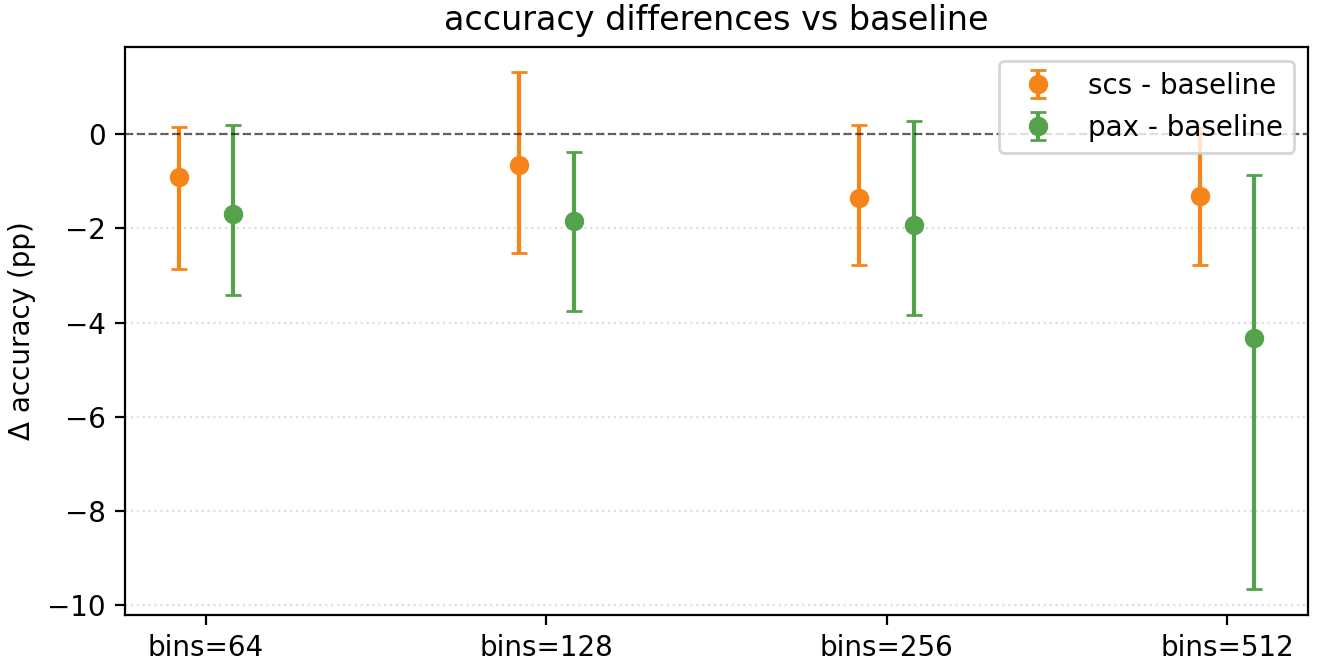}
    }
    \hfill
    \subfloat[F1.]{
        \includegraphics[width=0.48\textwidth]{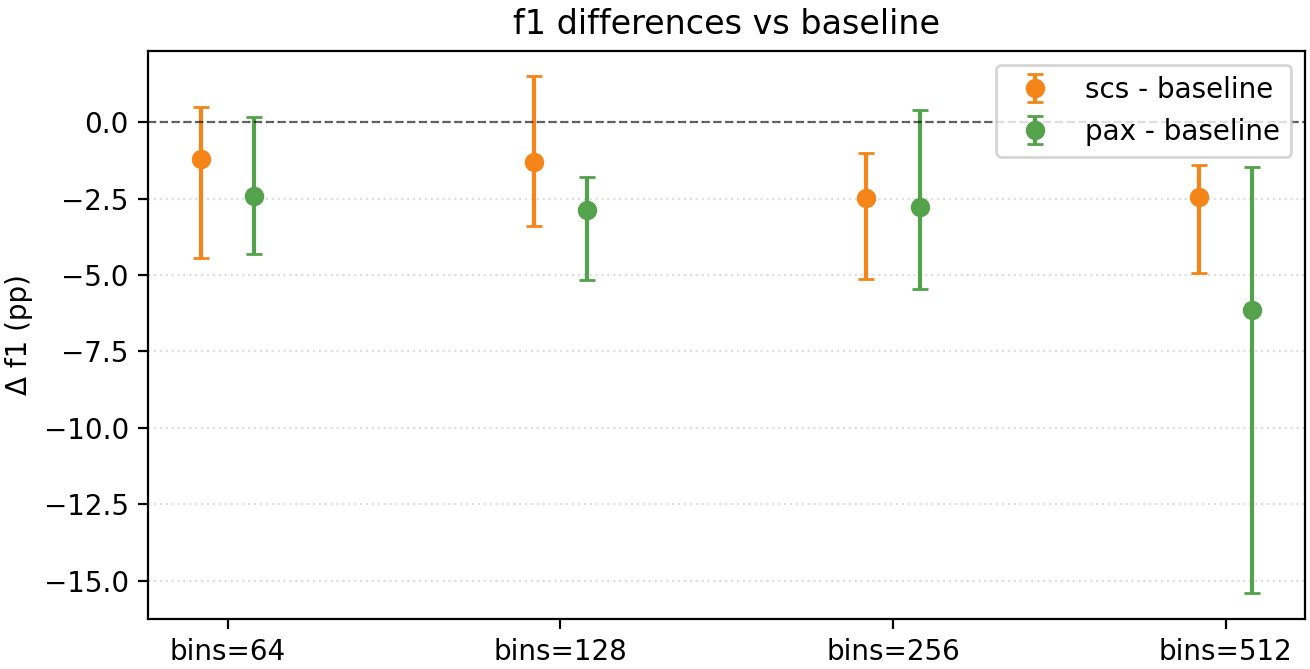}
    }
    \caption{Comparison of the performance gap between FedSCS and PAX to XGboost-baseline.}
    \label{fig:pef}
\end{figure*}







We trained XGBoost models for $10$ boosting rounds to study the effect of
histogram bin resolution on performance. Four bin sizes were evaluated: $B\in\{64,128,256,512\}$. Data were split into $80\%$ training and $20\%$ validation sets using a deterministic hash-based partitioning scheme. Pre-processing consisted of feature extraction followed by window mixing to avoid overfitting to temporally correlated segments and to preserve training heterogeneity.
Models were trained with a maximum tree depth of $4$ and hyperparameters $\lambda=1.0$, $\gamma=0.1$, and learning rate $\eta=0.2$. All $16$ classes were used, with the standard XGBoost multiclass softmax loss.

Across all evaluated metrics, FedSCS-XGB closely matches centralized XGBoost and consistently outperforms PAX under identical hyperparameter settings (Fig. ~\ref{fig:pef}, ~\ref{fig:diff_comp}, Table~\ref{tab:pax:massive_bins_metrics}).
The accuracy gap to the centralized baseline remains below 1.5\% point for all bin counts, while PAX exhibits a substantially larger degradation. FedSCS-XGB additionally shows higher and more stable F1-scores across clients, which can be seen in Fig. \ref{fig:diff_comp}. This is indicating improved robustness under class imbalance and client heterogeneity. Performance remains stable when varying the number of bins from 128 to 512, suggesting limited sensitivity to bin resolution once global alignment is
achieved.
Conversely, an increase in bin size does not necessarily result in enhanced performance metrics, as in Table \ref{tab:pax:massive_bins_metrics} indicated, which also holds true for the baseline itself.
\section{Discussion}\label{sec:discussion}
The results empirically support the convergence analysis by showing that FedSCS-XGB reaches the regime predicted by the sufficient, non-constructive proof using practical hyperparameter choices. While the analysis guarantees existence of a sketch accuracy level $\alpha$ that ensures proximity to the centralized objective, the experiments demonstrate that this regime is attainable without excessive bin counts. Compared to PAX, the improved performance and reduced client-wise variability suggest that explicit global bin alignment is more effective than party-adaptive surrogate representations in heterogeneous federated settings.
Although the evaluation is conducted on a cleanly recorded HAR dataset, these properties are particularly relevant for wearable-based monitoring of individuals with spinal cord injury, where sensor noise, subject-specific movement patterns, and physiological variability are common.
The observed robustness across clients indicates that FedSCS-XGB is well suited as extendable foundation for federated activity and vital-sign modeling under such real-world conditions. It should be noted that FedSCS-XGB does not yet support personalization. 

\section{Conclusion}\label{sec:conclusion}
We presented FedSCS-XGB, a server-centric surrogate protocol for federated XGBoost, and showed both theoretically and empirically that it can closely approximate centralized histogram XGBoost under sufficient hyperparameterization. Compared to PAX, FedSCS-XGB achieves higher accuracy and F1-scores with improved stability across clients. These results highlight global Hessian-weighted bin alignment as a key mechanism for effective federated tree-based learning.

Future work will extend the approach to noisy, longitudinal wearable datasets and personalized modeling of activity and physiological signals in real-world SCI cohorts.
\section*{Acknowledgment}

This work was supported by the Federal Ministry of Research, Technology and Space (BMFTR, Germany) as part of NeuroSys: Efficient AI-methodes for neuromorphic computing in practice (Projekt D) - under Grant 03ZU2106DA.  And by the Schweizer Paraplegiker Stiftung and the ETH Zürich Foundation (2021-HS-348) - Digital Transformation in Personalized Healthcare.
Data used in this project were conducted in compliance with relevant Swiss laws, institutional guidelines, and the Declaration of Helsinki. The study protocol was reviewed and approved by the Ethics Committee of Northwestern and Central Switzerland, EKNZ-2023-00400 (approved on 25.04.2023).

\bibliographystyle{IEEEtran}
\bibliography{literature}

@inproceedings{Bensland2023,
   author = {Sebastian Bensland and Alan Paul and Leoni Grossmann and Inge Eriks-Hogland and Robert Riener and Diego Paez-Granados},
   booktitle = {IEEE International Conference on System Integration},
   city = {Atlanta, US},
   title = {{Healthcare Monitoring for SCI individuals: Learning Activities of Daily Living through a SlowFast Network}},
   month = {1},
   year = {2023},
   url = {https://doi.org/10.1109/SII55687.2023.10039043},
}

@article{aouedi2024flsurvey,
  author       = {Oussama Aouedi and Alessandro Sacco and Lubna Umar Khan and Duc C. Nguyen and Mohsen Guizani},
  title        = {Federated Learning for Human Activity Recognition: Overview, Advances, and Challenges},
  journal      = {IEEE Open Journal of the Communications Society},
  volume       = {5},
  pages        = {7341--7367},
  year         = {2024},
  doi          = {10.1109/OJCOMS.2024.3484228}
}

@inproceedings{sozinov2018flhar,
  author       = {Konstantin Sozinov and Valeriy Vlassov and Sergey Girdzijauskas},
  title        = {Human Activity Recognition Using Federated Learning},
  booktitle    = {IEEE Intl Conf on Parallel \& Distributed Processing with Applications (ISPA)},
  year         = {2018},
  pages        = {1103--1111},
  doi          = {10.1109/BDCloud.2018.00164}
}

@inproceedings{concone2022hierarchicalfl,
  author       = {Francesco Concone and Chiara Ferdico and Giuseppe La Re and Massimo Morana},
  title        = {A Federated Learning Approach for Distributed Human Activity Recognition},
  booktitle    = {IEEE Int. Conf. on Smart Computing (SMARTCOMP)},
  year         = {2022},
  pages        = {269--274},
  doi          = {10.1109/SMARTCOMP55677.2022.00066}
}

@article{zhou2022twoDfl,
  author       = {Xinyu Zhou and Wei Liang and Jianxun Ma and Zhi Yan and Ke I.-K. Wang},
  title        = {2D Federated Learning for Personalized Human Activity Recognition in Cyber-Physical-Social Systems},
  journal      = {IEEE Internet of Things Journal},
  year         = {2022},
  doi          = {10.1109/JIOT.2022.9693094}
}

@article{ouyang2023clusterfl,
  author       = {Xiaofeng Ouyang and Zhijun Xie and Jing Zhou and Guoliang Xing and Jing Huang},
  title        = {ClusterFL: A Clustering-based Federated Learning System for Human Activity Recognition},
  journal      = {ACM Transactions on Sensor Networks},
  volume       = {19},
  number       = {1},
  pages        = {1--32},
  year         = {2023},
  doi          = {10.1145/3554980}
}

@article{chen2020fedhealth,
  author       = {Yiqiang Chen and Changhong Yu and Xiaoqing Qin and Wen Gao and Jing Wang},
  title        = {FedHealth: A Federated Transfer Learning Framework for Wearable Healthcare},
  journal      = {IEEE Intelligent Systems},
  year         = {2020},
  doi          = {10.1109/MIS.2020.9076082}
}

@article{cheng2023protohar,
  author       = {Dong Cheng and Lei Zhang and Xin Wang and Hao Wu and Aoran Song},
  title        = {ProtoHAR: Prototype-Guided Personalized Federated Learning for Human Activity Recognition},
  journal      = {IEEE Transactions on Neural Networks and Learning Systems},
  year         = {2023},
  doi          = {10.1109/TNNLS.2023.10122911}
}

@inproceedings{li2021metahar,
  author       = {Chen Li and Dapeng Niu and Bo Jiang and Xiang Zuo and Jian Yang},
  title        = {Meta-HAR: Federated Representation Learning for Human Activity Recognition},
  booktitle    = {Proceedings of the Web Conference (WWW)},
  year         = {2021},
  pages        = {912--922},
  doi          = {10.1145/3442381.3450006}
}

@article{wang2024hydra,
  author       = {Peng Wang and Tianhao Ouyang and Qi Wu and Qing Huang and Jun Gong and Xiaolin Chen},
  title        = {Hydra: Hybrid-Model Federated Learning for Human Activity Recognition on Heterogeneous Devices},
  journal      = {Journal of Systems Architecture},
  volume       = {147},
  pages        = {103052},
  year         = {2024},
  doi          = {10.1016/j.sysarc.2023.103052}
}

@inproceedings{tu2021feddl,
  author       = {Lifeng Tu and Xiaofeng Ouyang and Jing Zhou and Yuxiang He and Guoliang Xing},
  title        = {FedDL: Federated Learning via Dynamic Layer Sharing for Human Activity Recognition},
  booktitle    = {Proc. 19th ACM Conference on Embedded Networked Sensor Systems (SenSys)},
  year         = {2021},
  pages        = {15--28},
  doi          = {10.1145/3485730.3485946}
}

@inproceedings{gajanin2024asynchronous,
  author       = {R. Gajanin and A. Danilenka and A. Morichetta and S. Nastic},
  title        = {Towards Adaptive Asynchronous Federated Learning for Human Activity Recognition},
  booktitle    = {Proc. 14th Int. Conf. on the Internet of Things},
  year         = {2024},
  doi          = {10.1145/3703790.3703795}
}

@article{xiao2021harflprivacy,
  author       = {Z. Xiao and X. Xu and H. Xing and F. Song and X. Wang and B. Zhao},
  title        = {A Federated Learning System with Enhanced Feature Extraction for Human Activity Recognition},
  journal      = {Knowledge-Based Systems},
  volume       = {229},
  pages        = {107338},
  year         = {2021},
  doi          = {10.1016/j.knosys.2021.107338}
}

@article{glisic2024shc,
  author  = {Glisic, M. and colleagues},
  title   = {Changes in Secondary Health Conditions Among Individuals With Spinal Cord Injury After Transition From Inpatient Rehabilitation to Community Living},
  journal = {American Journal of Physical Medicine \& Rehabilitation},
  volume  = {103},
  number  = {11S},
  pages   = {S260},
  year    = {2024},
  month   = {Nov.},
  doi     = {10.1097/PHM.0000000000002600}
}

@inproceedings{ong2020adaptive,
  author    = {Y. S. Ong and B. Hooi and B. Q. Ho and D. S. Z. Sng and O. S. Y. Goh},
  title     = {Adaptive Histogram-Based Gradient Boosted Trees for Federated Learning},
  booktitle = {2020 IEEE International Conference on Big Data},
  year      = {2020},
  pages     = {1171--1180},
  doi       = {10.1109/BigData50022.2020.9378134}
}

@article{masson2019ddsketch,
  author  = {Masson, J. and others},
  title   = {DDSketch: A Fast and Fully-Mergeable Quantile Sketch with Relative-Error Guarantees},
  journal = {Proceedings of the VLDB Endowment},
  year    = {2019},
  volume  = {12},
  number  = {12},
  pages   = {2190--2201}
}

@misc{jones_federated_2022,
  title     = {Federated {XGBoost} on {Sample}-{Wise} {Non}-{IID} {Data}},
  url       = {http://arxiv.org/abs/2209.01340},
  doi       = {10.48550/arXiv.2209.01340},
  urldate   = {2025-07-09},
  publisher = {arXiv},
  author    = {Jones, Katelinh and Ong, Yuya Jeremy and Zhou, Yi and Baracaldo, Nathalie},
  month     = sep,
  year      = {2022},
  note      = {arXiv:2209.01340 [cs]}
}

@inproceedings{lindskog_histogram-based_2023,
  title     = {Histogram-{Based} {Federated} {XGBoost} using {Minimal} {Variance} {Sampling} for {Federated} {Tabular} {Data}},
  url       = {http://arxiv.org/abs/2405.02067},
  doi       = {10.1109/FMEC59375.2023.10306242},
  urldate   = {2025-07-11},
  booktitle = {2023 {Eighth} {International} {Conference} on {Fog} and {Mobile} {Edge} {Computing} ({FMEC})},
  author    = {Lindskog, William and Prehofer, Christian and Singh, Sarandeep},
  month     = sep,
  year      = {2023},
  note      = {arXiv:2405.02067 [cs]},
  pages     = {182--189}
}

@inproceedings{ma_gradient-less_2023,
  title     = {Gradient-less {Federated} {Gradient} {Boosting} {Trees} with {Learnable} {Learning} {Rates}},
  url       = {http://arxiv.org/abs/2304.07537},
  doi       = {10.1145/3578356.3592579},
  urldate   = {2025-07-11},
  booktitle = {Proceedings of the 3rd {Workshop} on {Machine} {Learning} and {Systems}},
  author    = {Ma, Chenyang and Qiu, Xinchi and Beutel, Daniel J. and Lane, Nicholas D.},
  month     = may,
  year      = {2023},
  note      = {arXiv:2304.07537 [cs]},
  pages     = {56--63}
}

@misc{ong_adaptive_2020,
  title     = {Adaptive {Histogram}-{Based} {Gradient} {Boosted} {Trees} for {Federated} {Learning}},
  url       = {http://arxiv.org/abs/2012.06670},
  doi       = {10.48550/arXiv.2012.06670},
  urldate   = {2025-07-14},
  publisher = {arXiv},
  author    = {Ong, Yuya Jeremy and Zhou, Yi and Baracaldo, Nathalie and Ludwig, Heiko},
  month     = dec,
  year      = {2020},
  note      = {arXiv:2012.06670 [cs]}
}

@inproceedings{bodynek_applying_2023,
  title     = {Applying {Random} {Forests} in {Federated} {Learning}: {A} {Synthesis} of {Aggregation} {Techniques}},
  shorttitle = {Applying {Random} {Forests} in {Federated} {Learning}},
  url       = {https://publikationen.bibliothek.kit.edu/1000162020},
  language  = {de},
  urldate   = {2025-07-30},
  booktitle = {Wirtschaftsinformatik 2023 {Proceedings}},
  author    = {Bodynek, Mattis and Leiser, Florian and Thiebes, Scott and Sunyaev, Ali},
  year      = {2023}
}

@inproceedings{chen2016xgboost,
  author    = {Tianqi Chen and Carlos Guestrin},
  title     = {{XGBoost}: A Scalable Tree Boosting System},
  booktitle = {Proceedings of the 22nd ACM SIGKDD International Conference on Knowledge Discovery and Data Mining},
  year      = {2016},
  pages     = {785--794},
  doi       = {10.1145/2939672.2939785}
}

@article{elgendi2025balancing,
  title   = {Balancing Cardiac Privacy with Quality in Video Recordings},
  author  = {Elgendi, Mohamed and Yu, Aojie and Bhutani, Saksham and Menon, Carlo},
  journal = {Communications Medicine},
  volume  = {5},
  number  = {486},
  year    = {2025},
  doi     = {10.1038/s43856-025-01175-0},
  publisher = {Springer Nature}
}

@techreport{vepakomma_split_2018,
	title = {Split learning for health: {Distributed} deep learning without sharing raw patient data},
	shorttitle = {Split learning for health},
	url = {http://arxiv.org/abs/1812.00564},
	doi = {10.48550/arXiv.1812.00564},
	number = {arXiv:1812.00564},
	urldate = {2022-07-28},
	institution = {arXiv},
	author = {Vepakomma, Praneeth and Gupta, Otkrist and Swedish, Tristan and Raskar, Ramesh},
	month = dec,
	year = {2018},
	note = {arXiv:1812.00564 [cs, stat]
type: article},
}

@article{kalabakov2024federated,
  title   = {Federated Learning for Activity Recognition: A System-Level Perspective},
  author  = {Kalabakov, Stefan and Jovanovski, Borche and Rakovic, Valentin and
             Denkovski, Daniel and Pfitzner, Bjarne and Konak, Orhan and
             Arnrich, Bert and Gjoreski, Hristijan},
  journal = {IEEE Access},
  year    = {2024},
  doi     = {10.1109/ACCESS.2023.3289220}
}

@article{bresnahan2022pain,
	title = {The demographics of pain after spinal cord injury: a survey of our model system},
	volume = {8},
	copyright = {2022 The Author(s), under exclusive licence to International Spinal Cord Society},
	issn = {2058-6124},
	shorttitle = {The demographics of pain after spinal cord injury},
	url = {https://www.nature.com/articles/s41394-022-00482-1},
	doi = {10.1038/s41394-022-00482-1},
	language = {en},
	number = {1},
	urldate = {2025-11-12},
	journal = {Spinal Cord Series and Cases},
	publisher = {Nature Publishing Group},
	author = {Bresnahan, James J. and Scoblionko, Benjamin R. and Zorn, Devon and Graves, Daniel E. and Viscusi, Eugene R.},
	month = jan,
	year = {2022},
	pages = {14},

}

@book{molnar2019interpretable,
  title      = {Interpretable Machine Learning},
  author     = {Molnar, Christoph},
  year       = {2019},
  subtitle   = {A Guide for Making Black Box Models Explainable},
  note       = {Online book, Chapter~9: Interpretable Models -- Decision Trees},
  url        = {https://christophm.github.io/interpretable-ml-book/}
}

@article{friedman2000greedy,
	title = {Greedy {Function} {Approximation}: {A} {Gradient} {Boosting} {Machine}},
	volume = {29},
	shorttitle = {Greedy {Function} {Approximation}},
	doi = {10.1214/aos/1013203451},
		journal = {The Annals of Statistics},
	author = {Friedman, Jerome},
	month = nov,
	year = {2000},
}

@misc{beutel_flower_2022,
	title = {Flower: {A} {Friendly} {Federated} {Learning} {Research} {Framework}},
	shorttitle = {Flower},
	url = {http://arxiv.org/abs/2007.14390},
	doi = {10.48550/arXiv.2007.14390},
	urldate = {2026-01-20},
	publisher = {arXiv},
	author = {Beutel, Daniel J. and Topal, Taner and Mathur, Akhil and Qiu, Xinchi and Fernandez-Marques, Javier and Gao, Yan and Sani, Lorenzo and Li, Kwing Hei and Parcollet, Titouan and Gusmão, Pedro Porto Buarque de and Lane, Nicholas D.},
	month = mar,
	year = {2022},
	note = {arXiv:2007.14390 [cs]},
}

@inproceedings{Ejtehadi2023LearningRehabilitation,
    title = {{Learning Activities of Daily Living from Unobtrusive Multimodal Wearables: Towards Monitoring Outpatient Rehabilitation}},
    year = {2023},
    booktitle = {IEEE International Conference on Rehablitation Robotics (ICORR)},
    author = {Ejtehadi, Mehdi and Amrein, Sabrina and Eriks Hoogland, Inge and Riener, Robert and Paez-Granados, Diego},
    month = {9},
    pages = {1--7},
    url = {https://www.research-collection.ethz.ch/handle/20.500.11850/619499},
    organization = {IEEE},
    address = {Singapore},
    arxivId = {10.3929/ ethz-b-000619499}
}

@INPROCEEDINGS{tifexpy,
  author={Ejtehadi, Mehdi and Graham, Gloria Edumaba and Ringstrom, Cailin and Du, Elisa and Riener, Robert and Paez-Granados, Diego},
  booktitle={2025 International Conference On Rehabilitation Robotics (ICORR)}, 
  title={Tifex-Py: Time-Series Feature Extraction for Python in a Human Activity Recognition Scenario}, 
  year={2025},
  volume={},
  number={},
  pages={1332-1339},
  doi={10.1109/ICORR66766.2025.11062978}}

\end{document}